\documentclass[twocolumn,10pt]{article}

\usepackage[margin=0.6in,columnsep=22pt]{geometry}
\usepackage{amsmath,amssymb}
\usepackage{amsthm}
\usepackage{booktabs}
\usepackage{enumitem}
\usepackage{natbib}
\usepackage{authblk}
\usepackage{hyperref}
\usepackage{microtype}
\usepackage{graphicx}
\usepackage{xspace}
\usepackage{verbatim}   % provides \begin{comment}...\end{comment}
\usepackage{caption}
\usepackage{titlesec}

\titlespacing*{\section}{0pt}{0.9em}{0.3em}
\titlespacing*{\subsection}{0pt}{0.6em}{0.2em}
\titlespacing*{\paragraph}{0pt}{0.5em}{0.8em}

\newtheorem{theorem}{Theorem}
\newtheorem{proposition}{Proposition}
\newtheorem{lemma}{Lemma}
\newtheorem{corollary}{Corollary}
\newtheorem{remark}{Remark}
\newtheorem{assumption}{Assumption}

\newcommand{\sema}{\hat{H}}
\newcommand{\conf}{\alpha}
\newcommand{\divintra}[1][]{D_{\mathrm{intra}\ifx&#1&\else,#1\fi}}
\newcommand{\Wmat}{W^{*}}
\newcommand{\R}{\mathbb{R}}

\title{When Does Delegation Beat Majority? \\
\large A Delegation-Based Aggregator for Multi-Sample LLM Inference}

\author[1]{Yasushi Sakai}
\author[1]{Allen Song}
\author[1]{Kent Larson}
\affil[1]{MIT Media Lab, Cambridge, MA \\
\texttt{\{yasushis, allen017, kll\}@media.mit.edu}}
\date{}
\begin{document}
\twocolumn[
  \begin{@twocolumnfalse}
    \maketitle
    \begin{center}
    \begin{minipage}{0.78\linewidth}
    \begin{abstract}
      \noindent
      Majority voting is the default unsupervised aggregator for
      multi-sample LLM inference, but it discards two signals: 
      within-group answer entropy and between-group reasoning
      geometry. We aggregate by delegation instead (Propagational Proxy
      Voting, PPV; \citealp{sakai2025ppv}): each group of samples keeps
      weight on its own answer in proportion to its entropy-based
      confidence (\textsc{When}) and routes the rest to peers by
      reasoning-embedding similarity (\textsc{Whom}); the stationary
      distribution of the resulting delegation matrix picks the consensus
      answer. This requires neither gold labels nor training.
      On MMLU-Pro with $128$ samples per question, delegation beats majority by $+1.5$\,pp overall and $+2.24$\,pp on non-trivial questions (McNemar
      $p \approx 1.0 \times 10^{-14}$, $n = 8{,}099$), overturning wrong
      majorities whose answer cluster is geometrically incoherent while
      the correct minority is tight. We then characterize exactly when
      delegation overturns majority: a two-option model gives a
      closed-form flip condition on each option's confidence and the
      weight it routes to the other, with a do-no-harm corollary for
      near-unanimous questions.
      The condition calls the realized winner on $96.5\%$ of non-trivial
      questions, and its predicted mass gap tracks the realized gap at
      $r = 0.97$. We did not find any other unsupervised ensemble methods that
      close the oracle gap.
    \end{abstract}
    \end{minipage}
    \end{center}
    \vspace{1em}
  \end{@twocolumnfalse}
]

% =============================================================================
\section{Introduction}
% =============================================================================

Sampling an LLM many times and aggregating the answers, a procedure
known as \emph{self-consistency} \citep{wang2023selfconsistency}, is now
standard practice for reasoning tasks. The aggregator is almost always majority vote over the parsed
answers: simple, model-agnostic, label-free, and a strong baseline. Yet
each sample carries two signals beyond its parsed answer that
majority discards:
\begin{enumerate}[leftmargin=1.5em, topsep=2pt, itemsep=2pt]
\item \textbf{Letter-level uncertainty.} A group of $k$ samples that all
  reach the same letter is more informative than $k$ samples split
  $\lceil k/2 \rceil$ to $\lfloor k/2 \rfloor$. Letter entropy over a small
  group is a free per-group confidence signal, and it is precisely the
  MCQ-degenerate case of semantic
  entropy \citep{kuhn2023semantic, farquhar2024hallucinations}.
\item \textbf{Reasoning geometry.} Embedding each sample's reasoning text
  gives a high-dimensional position in semantic space. Two groups that
  pick the same letter via similar reasoning sit close in that space; two
  groups that pick the same letter via unrelated reasoning sit far apart.
  Majority does not take this into account.
\end{enumerate}

\begin{figure}[!t]
\centering
\includegraphics[width=\linewidth]{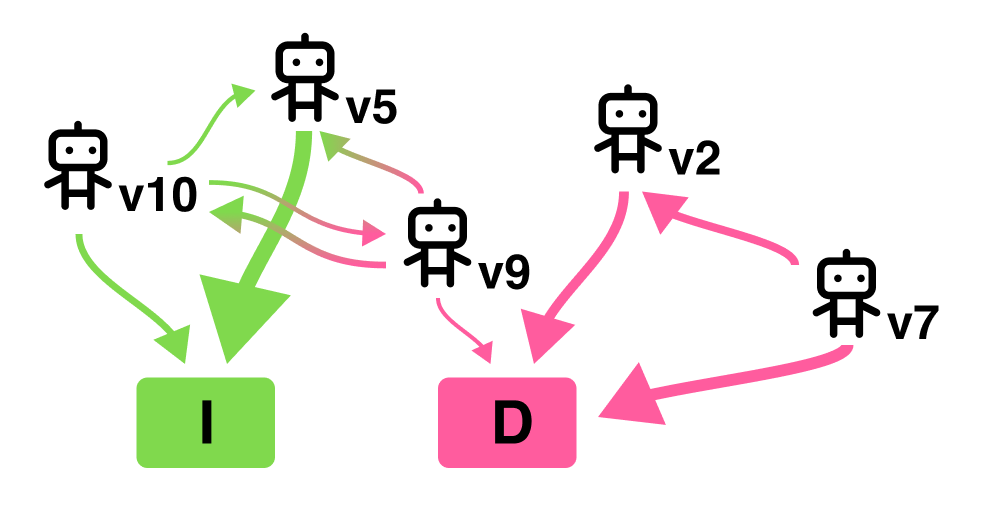}
\caption{\textbf{Simplified network of direct voting and delegation 
from problem \texttt{philosophy\_314}.} Each bot is a voter (a group of LLM
samples). It keeps some weight on its own answer letter
and routes the rest to peers whose reasoning-embedding aligns. The
full graph has $16$ voters and majority picks the wrong letter $10$
to $6$; we draw $5$ ($3$ majority + $2$ minority) for legibility. A
coherent minority cluster plus a defecting majority voter (\,$v_9$\,)
absorb enough redirected mass to flip the consensus. The green letter
\texttt{I} stands in for the gold answer which PPV picked; the red
\texttt{D} is the letter the majority collectively mispicked.}
\label{fig:ppv-teaser}
\end{figure}

\paragraph{Approach.} We treat each group of samples as a \emph{delegate}
in a voting graph and aggregate via Propagational Proxy Voting (PPV;
\citealp{sakai2025ppv}), a liquid-democracy mechanism that propagates
voting mass through an absorbing Markov chain. PPV exposes two levers
per delegate, one for each of the discarded signals above:
\begin{itemize}[leftmargin=1.5em, topsep=2pt, itemsep=2pt]
\item \textsc{When}: how much weight a delegate keeps on its own pick.
  We drive it with the group's letter entropy: low entropy
  $\Rightarrow$ high self-weight.
\item \textsc{Whom}: how the remaining weight is split across peer
  delegates. We drive it with per-question-centered embedding cosine:
  delegate to peers whose reasoning aligns.
\end{itemize}
The result is a parameter-free, label-free aggregator, once embeddings
are precomputed.

\paragraph{Contributions.}
\begin{enumerate}[leftmargin=2.1em, topsep=2pt, itemsep=2pt, label=(\roman*)]
\item A concrete parameterization of PPV from per-LLM-sample
  signals: \textsc{When} from letter-level semantic entropy, and
  \textsc{Whom} from per-question-centered embedding cosine. This yields
  a recipe for using delegation networks as an unsupervised aggregator
  over sampled generations. Per-question centering is essential: raw
  same-question cosines sit in $[+0.88, +0.99]$; centering exposes a
  discriminative geometry spanning $[-0.68, +0.64]$.
\item A large-scale empirical study on the full $12{,}032$-question
  MMLU-Pro test split: PPV with $\conf = 1 - \sema$ achieves $42.2\%$
  versus majority's $40.7\%$, and $30.2\%$ versus $28.0\%$ on the
  $8{,}099$ non-trivial questions ($+2.24$\,pp, paired McNemar
  $p \approx 10^{-14}$).
\item A \textsc{When}/\textsc{Whom} decomposition that isolates the
  lever: the entire gain comes from \textsc{When}. Explicit peer-quality
  multipliers in \textsc{Whom} provide no lift, and in $11$ of $25$
  tested configurations they actively hurt, because PPV's multi-hop
  propagation does implicit \emph{quality laundering}.
\item Negative results that constrain the design space for unsupervised
  LLM aggregators: $P(\text{True})$ \citep{kadavath2022know} has
  area under the ROC curve (AUROC) $0.47$, which is anti-correlated with
  correctness; CoCoA-style products \citep{becker2024cocoa} are dragged
  down by it; and the unsupervised mode-selection ensembles we piloted
  do not beat the best single mode, leaving per-question polarity
  selection as the open problem where supervision plausibly helps.
\item An exact two-block characterization of when delegation overturns
  majority (Section~\ref{sec:theory}): a closed-form flip condition on
  confidence and cross-cluster leakage, with a conditional do-no-harm
  corollary, validated against the full mechanism.
\end{enumerate}

% =============================================================================
\section{Related Work}
\label{sec:related}
% =============================================================================

Our work sits at the intersection of six threads of prior work.
We survey each in turn and situate our contribution.

% ---------------------------------------------------------------------------
\subsection{Self-consistency and multi-sample aggregation}
% ---------------------------------------------------------------------------

\citet{wang2023selfconsistency} established sample-and-vote as the default
unsupervised aggregator for chain-of-thought reasoning.  The aggregator is
plain plurality: generate many responses, extract the answer string from
each, and return the most frequent one.  Subsequent work has explored
alternatives within this regime.  Verifier-reranked voting
\citep{cobbe2021training} scores candidates with a trained reward model
before selecting.  There are generative approaches like Universal Self-Consistency (USC;
\citealt{chen2023universal}) which prompts the LLM itself to nominate the most
coherent candidate.  Ranked voting methods such as instant-runoff voting, 
Borda count, mean reciprocal rank have recently been applied to LLM
self-consistency, yielding modest gains over plurality \citep{chen2025ranked}.

The most directly concurrent work to ours is \citet{pan2025beyond}, who
replace majority with aggregators that exploit first- and second-order
correlations among model responses, evaluating on MMLU and UltraFeedback;
we differ in using a single model's repeated samples (not a panel of models),
and in grounding the aggregator in a formal delegation mechanism with an
explicit per-voter confidence parameterization.

Adaptive sampling methods reduce the sample budget without sacrificing
accuracy: RASC \citep{wan2024rasc} trains a CoT-quality scoring function
for early stopping and score-weighted voting; \citet{aggarwal2023let}
explores heuristic stopping rules on the sample set.
\citet{cordero2025certified} provide a theoretical foundation, deriving
finite-sample concentration bounds that quantify how reliably majority vote
recovers the mode of the model's distribution, and introduce the Martingale
Majority Certificate as a sequential stopping rule.  These results
characterize the regime our aggregator operates in but do not address
\emph{which} aggregation function to use once a sample budget is fixed.

% ---------------------------------------------------------------------------
\subsection{Test-time compute scaling}
% ---------------------------------------------------------------------------

Test-time compute (TTC) allocates additional inference compute to improve
output quality.  \citet{snell2024scaling} survey the space, classifying
methods into parallel (sample and aggregate) and sequential (iterative
refinement, tree search).  Best-of-N with a verifier
\citep{cobbe2021training,lightman2023let} is the standard parallel
baseline.  Beam search and Monte Carlo Tree Search \citep{yao2023tree}
explore the sequential branch.  Our work is parallel and unsupervised:
we draw $128$ samples and apply a richer aggregation function, with no
additional rollouts and no verifier.

\citet{muennighoff2025s1} show that a simple budget-forcing approach
(``wait'' tokens) transfers reasoning compute into longer chains.
\citet{sequentialedge2025} argue that inverse-entropy–weighted voting over
sequentially refined outputs outperforms parallel majority at matched
compute, connecting entropy-based weighting to the sequential paradigm.
Our finding that $\conf = 1 - \sema$ (inverse entropy as confidence)
delivers the full gain in the parallel regime is broadly consistent with
the entropy-weighting intuition, while demonstrating it within the
PPV delegation framework.

% ---------------------------------------------------------------------------
\subsection{Semantic Entropy (SE) and uncertainty quantification}
% ---------------------------------------------------------------------------

\citet{kuhn2023semantic} introduced semantic entropy: cluster generations
by meaning equivalence (via NLI), then take the entropy of the cluster
distribution.  \citet{farquhar2024hallucinations} extended the method for
hallucination detection at scale (Nature 2024), showing that semantic
entropy is a reliable unsupervised signal for factual reliability.  For
multiple-choice questions the NLI clustering degenerates to grouping by the
extracted letter, which is the form we use; we additionally apply the
Miller--Madow bias correction \citep{miller1955note}.

Several recent papers extend or approximate semantic entropy.
\citet{kl2024sep} propose Semantic Entropy Probes (SEPs), which
approximate semantic entropy from single-forward-pass hidden states,
reducing the 5--10$\times$ inference overhead.  Kernel Language Entropy
(KLE; \citealt{duan2024kle}) generalizes SE to a kernel-based uncertainty
measure that captures both intra-cluster spread and inter-cluster distance
without hard partitions; \citet{nguyen2025snne} (ACL 2025) independently
make a similar argument and propose SNNE, a nearest-neighbour entropy
estimator with provable generalization over SE. We use letter-level semantic
entropy which is a low-overhead instantiation of this family tailored to
multiple-choice; the centering transformation we apply to embeddings before
computing cosines is in spirit similar to KLE's inter-cluster term.  A
statistically consistent estimator of semantic uncertainty for open-ended
generation is studied by \citet{nikitin2024consistent}, with formal
guarantees on convergence.

% ---------------------------------------------------------------------------
\subsection{Confidence calibration and self-verification}
% ---------------------------------------------------------------------------

\citet{kadavath2022know} showed that LLMs can be prompted to estimate
$P(\text{True})$ for their own outputs.  CoCoA
\citep{becker2024cocoa} combines $P(\text{True})$ with semantic entropy
multiplicatively.  We evaluate $P(\text{True})$ on our setup and find it
anti-correlated with correctness (AUROC $0.47$); CoCoA-style products
inherit the anti-correlation and underperform $\sema$ alone
(\S\ref{sec:negatives}).

The broader calibration literature makes such failures unsurprising.  \citet{guo2017calibration}
document that standard training produces systematically over-confident
models; temperature scaling corrects marginal calibration but not
group-level calibration.  \citet{zhou2024calibrating} show that RL fine-tuning
(DPO, PPO, GRPO) degrades calibration by exploiting reward advantage
weighting; post-RL SFT restores it.  The Qwen3-1.7B model we use is
trained with RL, which offers a mechanistic explanation for why its
$P(\text{True})$ is anti-correlated with correctness in our setting.
Graph-based confidence calibration \citep{li2024graph} uses similarity
graphs over multiple responses and learns to correct miscalibration,
which is a supervised approach that would require gold labels unavailable
in our unsupervised regime.  \citet{lin2024generating} study verbalized
confidence and its calibration properties.  The overall lesson across this
literature is that reliable confidence signals for small RL-trained models
at high temperature cannot be taken for granted; our negative result
on $P(\text{True})$ is consistent with this.

% ---------------------------------------------------------------------------
\subsection{Liquid democracy and propagational proxy voting}
% ---------------------------------------------------------------------------

Liquid democracy \citep{ford2002delegative} is a voting framework in which
agents may delegate their votes transitively.  Its properties have been
studied theoretically \citep{christoff2017preference,brill2018interactive},
including the Condorcet-jury analysis of when delegation helps versus hurts
relative to direct voting \citep{kahng2018liquid}, the algorithmic
perspective on optimal delegation \citep{brill2022liquid}, and the
game-theoretic analysis of rational delegation \citep{bloembergen2019rational}.
Recent computational social-choice work continues to refine the picture:
\citet{alouf2024controlling} (IJCAI 2024) study manipulation of delegation
graphs; \citet{alouf2025cost} (AAAI 2025) analyze the welfare cost of
liquid mechanisms; and \citet{bersetche2025generalizing} (IJGT 2025)
generalize liquid democracy to multi-agent settings with equilibrium
analysis.

\citet{sakai2025ppv} introduce PPV, an absorbing-Markov-chain formulation
that admits \emph{split} delegation (each voter can distribute its budget
across multiple peers and one policy simultaneously).  To our knowledge the
present work is the first application of liquid democracy, or indeed of
any delegation-graph mechanism, to multi-sample LLM aggregation.  The
conceptual mapping is clean: each group of samples is a voter, the
delegation budget encodes trust in peers calibrated by embedding cosine,
and the stationary distribution of the chain resolves the consensus.

% ---------------------------------------------------------------------------
\subsection{Multi-agent debate and ensemble consensus}
% ---------------------------------------------------------------------------

A parallel line of work, multi-agent debate (MAD), aggregates outputs
by having multiple LLM \emph{instances} debate iteratively
\citep{du2023improving, liang2024encouraging}.
\citet{khan2024debating} show that debate between models can surface
factual errors that a single model would propagate.  The key differences
from our setting are: (i) MAD involves multiple distinct models or
independently prompted instances that communicate across rounds, whereas
our setting draws repeated samples from a fixed temperature distribution of
one model; (ii) MAD aggregation is sequential (each round conditions on
prior outputs), whereas ours is parallel; (iii) MAD is compute-intensive
and can exhibit sycophancy, with agents converging to a wrong consensus
under social pressure \citep{consensagent2025}.

Mixture-of-agents approaches \citep{wang2024moa} use heterogeneous models
as an ensemble; \citet{zhao2024llmcouncil} run a council of LLMs that
collectively author, take, and grade a benchmark, ranking each other in
a democratic fashion. These methods lie in a complementary regime
(multiple models, iterative, often supervised in some component) to our
single-model, single-pass, fully unsupervised approach.  Recent work
applying social choice formalisms to multi-agent debate
\citep{fromdebatetodecision2025} asks when it is \emph{safe} to commit to
a debate outcome, which is a question about confidence rather than
aggregation function, but one that connects to our concern about polarity
mismatch.

Our work also differs from \citet{pan2025beyond} and concurrent LLM-panel
aggregation methods in that we treat the \emph{reasoning text} as a
first-class signal (via embedding geometry) rather than only the final
answer distribution.  This reasoning-geometry lever is invisible to methods
that pool only extracted answers.

% =============================================================================
\section{Preliminaries: Propagational Proxy Voting}
\label{sec:prelim}
% =============================================================================

We summarize the PPV machinery used; for proofs and the general formulation
see \citet{sakai2025ppv}.

\paragraph{Setup.} Fix a set of voters $N = \{d_1, \dots, d_n\}$ and a set
of policies (possible answers) $P$. PPV operates on a
column-stochastic \emph{voting matrix}:
\begin{equation}
V \;=\; \begin{bmatrix} V_{d \leftarrow d} & \mathbf{0} \\
V_{p \leftarrow d} & I_{|P|} \end{bmatrix}, \quad
\sum_{i} V_{i,j} = 1 \;\; \forall j,
\label{eq:V}
\end{equation}
where $V_{d \leftarrow d} \in \R^{n \times n}$ is the delegate-to-delegate
sub-block (with zero diagonal), $V_{p \leftarrow d} \in \R^{|P| \times n}$ is
the delegate-to-policy sub-block, and $I_{|P|}$ makes each policy an absorbing
state. Column $j$ encodes the outgoing distribution of delegate $d_j$: how
its unit of voting mass splits between its own pick (a policy) and its
peers (other delegates).

\paragraph{Consensus via the limit matrix.}
\citet[Theorem~IV.3]{sakai2025ppv} prove that the limit
$\Wmat = \lim_{x \to \infty} V^{x}$ exists and is computable by repeated
squaring \citep[Corollary~IV.4]{sakai2025ppv}. The columns of $\Wmat$ live
entirely on the policy block: each delegate's mass is fully absorbed. The
PPV \textbf{consensus winner} is
\begin{equation}
\hat{p} \;=\; \arg\max_{p \in P} \; \sum_{j \in N} \Wmat_{p, j}.
\label{eq:winner}
\end{equation}

PPV reduces to several familiar mechanisms in limits: if every column has
$V_{\pi_j, j} = 1$ (each delegate keeps everything), PPV collapses to
majority vote weighted by the column distribution; if some column has full
mass on a single peer, that peer absorbs the delegator's vote entirely
(classical proxy voting).

% =============================================================================
\section{Method}
\label{sec:method}
% =============================================================================

We instantiate PPV for unsupervised LLM aggregation in four steps:
sampling, signal extraction, matrix construction, and propagation.

\subsection{Sampling and partitioning}

For each question $q$, the solver generates $S = 128$ chain-of-thought
samples, each terminating in an extracted answer letter $\ell_c \in
\{A, \dots, J, \varnothing\}$ ($\varnothing$ for parsing failure). We
partition the $128$ samples deterministically into $n = 16$ groups of
$g = 8$. Each group becomes a \emph{voter} (delegate) in PPV's terminology;
its pick $\pi_j$ is the majority letter among its $8$ samples
($\pi_j = \texttt{Z}$ if the majority is $\varnothing$). The set of
policies $P$ is the set of letters actually picked. The partition is
fixed (not learned); $128/16$ gives each group enough samples for a stable
entropy estimate while leaving enough voters for meaningful delegation.

\subsection{Per-voter signals}

\paragraph{Letter-level semantic entropy.} For voter $j$ with letter counts
$\{c_\ell\}_{\ell \in \mathcal{L}_j}$ over its $8$ samples
($\mathcal{L}_j$ = letters observed, $K_j = |\mathcal{L}_j|$), the
Miller--Madow corrected and $\log g$-normalized entropy is
\begin{equation}
\sema_j \;=\;
\frac{- \sum_{\ell \in \mathcal{L}_j} \frac{c_\ell}{g}
\log \frac{c_\ell}{g} \;+\; \frac{K_j - 1}{2g}}{\log g}.
\label{eq:sema}
\end{equation}
$\sema_j = 0$ when all $8$ samples agree on one letter; values approach $1$
when the $8$ samples spread evenly across letters. The Miller--Madow
correction $\frac{K_j-1}{2g}$ \citep{miller1955note} compensates for the
downward bias of plug-in entropy at $g = 8$; for fully spread distributions
the correction can push the normalized value mildly above $1$, in which
case we clip downstream confidence values to $[0, 1]$.

\paragraph{Reasoning embeddings and per-question centering.} Each sample's
reasoning text is embedded by Qwen3-Embedding-8B into a $4096$-dimensional
unit vector $e_c$. Let $\bar e^{(q)} = \frac{1}{S} \sum_{c=1}^{S} e_c$
denote the per-question centroid over all $S = 128$ samples. We center and
renormalize:
\begin{equation}
\tilde e_c \;=\; \frac{e_c - \bar e^{(q)}}
                      {\| e_c - \bar e^{(q)} \|_2}, \quad
\bar e_j \;=\; \frac{1}{g}\sum_{c \in \text{group}_j} \tilde e_c,
\end{equation}
and define the inter-voter cosine matrix
$\cos_{ij} = \langle \bar e_i / \|\bar e_i\|, \bar e_j / \|\bar e_j\| \rangle$.

Centering is the difference between an informative geometry and noise. With uncentered embeddings,
off-diagonal cosines on a single question lie in $[+0.88, +0.99]$ with standard deviation $\approx 0.025$: every voter's
reasoning embedding is dominated by the shared question content. Subtracting
the per-question centroid removes that common component and yields cosines
in $[-0.68, +0.64]$ with standard deviation $\approx 0.32$.

\paragraph{Intra-group reasoning diversity.} Within voter $j$'s $8$ samples,
the average pairwise centered cosine quantifies how varied its internal
reasoning is:
\begin{equation}
\divintra[j] \;=\; \frac{1}{2}
\left(1 - \frac{1}{g(g-1)} \!\!\sum_{\substack{a, b \in \text{group}_j \\ a \ne b}}\!\!
\langle \tilde e_a, \tilde e_b \rangle \right) \in [0, 1].
\end{equation}
A voter with $\divintra[j]$ near $0$ has $8$ near-duplicate samples
(templated reasoning); near $0.5$ has $8$ mutually orthogonal samples
(independent reasoning paths).

\subsection{Constructing the voting matrix}

For each voter $j$ we choose a \emph{confidence}
$\conf_j = f(\sema_j, \divintra[j]) \in [0, 1]$ from one of five modes
(Table~\ref{tab:modes}). Column $j$ of $V$ is then
\begin{equation}
\begin{aligned}
V_{\pi_j, j} &= \conf_j, \\
V_{i, j} &= (1 - \conf_j) \cdot \frac{\max(\cos_{ij}, 0)}
{\sum_{\ell \ne j} \max(\cos_{\ell j}, 0)} \;\; \forall\, i \ne j,
\end{aligned}
\label{eq:column}
\end{equation}
with $V_{j, j} = 0$ and $V_{p, j} = 0$ for $p \ne \pi_j$. The \textsc{When}
mass $\conf_j$ goes into the policy block ($V_{p \leftarrow d}$); the
\textsc{Whom} mass $1 - \conf_j$ goes into the delegate block
($V_{d \leftarrow d}$). Clipping negative cosines to zero encodes ``do not
delegate to peers whose reasoning is anti-aligned with mine.'' If every
$\max(\cos_{\ell j}, 0) = 0$ (no positively-aligned peer), the peer budget
is split uniformly over the other $n - 1$ voters, keeping the column
stochastic.

\begin{table}[t]
\centering
\caption{Confidence modes. $s = \sema$, $d = \divintra$. All clipped to $[0, 1]$.}
\label{tab:modes}
\small
\begin{tabular}{ll}
\toprule
\textbf{Mode} & $\conf = f(s, d)$ \\
\midrule
\texttt{confidence}        & $1 - s$ \\
\texttt{inverted}          & $s$ \\
\texttt{confidence\_x\_div}& $(1-s) \, d$ \\
\texttt{inverted\_x\_div}            & $s \, d$ \\
\texttt{div}               & $d$ \\
\bottomrule
\end{tabular}
\end{table}

\subsection{Propagation and decision}

We compute $\Wmat$ by repeated squaring of $V$ and read off the consensus
winner via Equation~\ref{eq:winner}. 

% =============================================================================
\section{Experiments}
\label{sec:experiments}
% =============================================================================

\paragraph{Setup.} Solver: Qwen3-1.7B at temperature $1.5$, $128$
chain-of-thought samples per question. Benchmark: MMLU-Pro test split
\citep{wang2024mmlupro}, all $14$ subjects, $12{,}032$ questions.
Embeddings: Qwen3-Embedding-8B \citep{qwen2025embedding}, $4096$-dim,
L2-normalized, stored as fp16 memmap. This equates to a total generation
of $\approx 1.54$M reasoning trajectories.

\paragraph{Baselines.}
\begin{itemize}[leftmargin=1.2em, topsep=2pt, itemsep=2pt]
\item \textbf{Majority}: top letter across $16$ voter picks, $\varnothing$
  excluded; ties counted incorrect.
\item \textbf{Best dictator}: the single voter index with highest accuracy
  across the dataset, applied to every question.
\item \textbf{Oracle (pass@16)}: correct iff any of the $16$ voters picked
  gold. Ceiling for any $16$-voter aggregator.
\end{itemize}

\paragraph{Main results.} Table~\ref{tab:main} shows that PPV with
$\conf = 1 - \sema$ beats majority by $+1.50$\,pp overall and $+2.24$\,pp
on the non-trivial subset. The paired McNemar test on the non-trivial
subset (Table~\ref{tab:mcnemar}) reports $366$ PPV wins versus $185$
majority wins out of $8{,}099$ non trivial questions, $p \approx 1.0 \times 10^{-14}$.

\begin{table}[t]
\centering
\caption{Accuracy on MMLU-Pro test (Qwen3-1.7B, $128$ samples/question).
A question is \emph{trivial} when its top letter receives $\ge 12/16$
votes; on such questions all confidence modes agree with majority by
construction. We report the full set and the non-trivial remainder
($n = 8{,}099$ of $12{,}032$), where the differentiating signal lives.
}
\label{tab:main}
\small
\begin{tabular}{lrrr}
\toprule
\textbf{Method} & \textbf{All Q} & \textbf{Non-trivial} & \textbf{$\Delta$ maj.} \\
\midrule
best dictator                    & 35.78\% & --      & --          \\
majority                         & 40.71\% & 28.00\% & --          \\
PPV \texttt{inverted\_x\_div}        & 41.71\% & 29.52\% & $+1.52$\,pp \\
PPV \texttt{div}           & 41.92\% & 29.83\% & $+1.83$\,pp \\
\textbf{PPV \texttt{confidence}} & \textbf{42.21\%} & \textbf{30.24\%} & \textbf{$+2.24$\,pp} \\
\midrule
oracle (pass@16)                 & 44.68\% & 33.93\% & $+5.93$\,pp \\
\bottomrule
\end{tabular}
\end{table}

\begin{figure}[t]
\centering
\includegraphics[width=\linewidth]{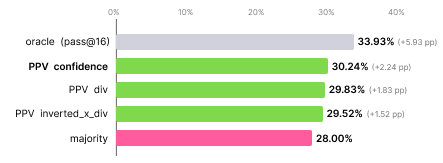}
\caption{Accuracy on the $8{,}099$ non-trivial MMLU-Pro questions.
Bars zoomed to highlight the $\sim$6\,pp gap between majority and oracle.
PPV (\texttt{confidence}) closes $38\%$ of that gap unsupervised.}
\label{fig:results}
\end{figure}

\begin{table}[t]
\centering
\caption{Paired McNemar test, \texttt{confidence} vs.\ majority on
the $8{,}099$ non-trivial questions.}
\label{tab:mcnemar}
\small
\begin{tabular}{lr}
\toprule
PPV correct, majority wrong  & $366$ \\
Majority correct, PPV wrong  & $185$ \\
Net                          & $+181$ \\
Two-sided exact binomial $p$ & $\approx 1.0 \times 10^{-14}$ \\
\bottomrule
\end{tabular}
\end{table}

\paragraph{Per-disagreement precision.} On the non-trivial subset,
\texttt{div} wins $187/(187+39) = 82.7\%$ of disagreements,
\texttt{inverted\_x\_div} wins $68.9\%$, and \texttt{confidence} wins
$66.4\%$ (Figure~\ref{fig:disagreement}). The aggregate-accuracy winner
(\texttt{confidence}) and the per-disagreement-precision winner
(\texttt{div}) are different modes: the former disagrees with majority
more often and is right less often per disagreement; the latter
disagrees less often but more reliably.

\begin{figure}[t]
\centering
\includegraphics[width=\linewidth]{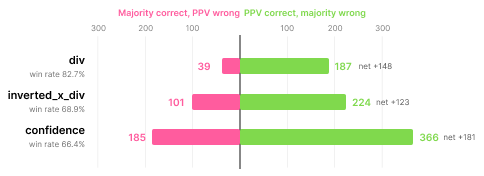}
\caption{Where the disagreements land on the non-trivial subset. Left
(pink) = majority correct, PPV wrong; right (green) = PPV correct,
majority wrong. \texttt{div} fires rarely but is right $82.7\%$ of the
time; \texttt{confidence} fires more and rescues the most absolute
questions ($+181$ net).}
\label{fig:disagreement}
\end{figure}

\paragraph{No single mode dominates.} The five modes form a Pareto front,
not a ranking. \texttt{confidence} maximizes overall accuracy;
\texttt{div} maximizes per-disagreement precision; \texttt{inverted\_x\_div}
maximizes accuracy on the inverted-polarity ``confidently wrong'' family.
The modes rescue overlapping but distinct sets of questions.

% =============================================================================
\section{Mechanism}
\label{sec:mechanism}
% =============================================================================

We unpack \emph{how} PPV delivers the gain through a worked example, the
mirrored failure case, and a \textsc{When}/\textsc{Whom} ablation.

\subsection{A clear majority overturned: \texttt{philosophy\_314}}
\label{sec:philosophy314}

Question \texttt{philosophy\_314} (gold $=$ I) is a $10$--$6$ majority for
the wrong letter: $10$ voters pick \texttt{D}, $6$ pick \texttt{I}. This is
not a tiebreak, since D wins majority with a $4$-vote margin over the
runner-up. PPV with \texttt{confidence} nonetheless resolves the question to
\texttt{I}. The flip is driven by \textsc{Whom}, not \textsc{When}.

\paragraph{Own-pick weight (\textsc{When}) is nearly a wash.}
D-pickers have mean $\sema = 0.713$ and I-pickers $0.673$, a gap of only
$0.04$. Under $\conf = 1 - \sema$, the first-iteration self-mass favors
\emph{D}: $\sum_{j \in D} \conf_j = 2.87$ vs $\sum_{j \in I} \conf_j = 1.96$.
Entropy alone would keep D ahead.

\paragraph{Reasoning geometry (\textsc{Whom}) is decisive.} In centered
embedding space, the two clusters look qualitatively different
(Figure~\ref{fig:phil314-pca}). The $10$ D-voters reach \texttt{D} via
\emph{unrelated} reasoning paths, with mean within-D cosine $+0.050$, a
near-orthogonal cloud. The $6$ I-voters reach \texttt{I} via tightly
similar reasoning, with mean within-I cosine $+0.519$. The cross-cluster
mean cosine is $+0.197$ --- \emph{higher} than within-D: a typical
D-voter's reasoning sits closer to the I-cluster than to its fellow
D-voters. The clipped-cosine peer weights inherit this asymmetry:
D-pickers route on average $53.4\%$ of their peer budget toward the
I-cluster, even though it holds only $6$ of their $15$ peers, because the
within-D weights are too small to compete. I-pickers reciprocate less
($48.1\%$ of their budget crosses back) while keeping more own-pick mass,
so the net delegation flow points from D to I. The $16 \times 16$ initial
voting matrix $V$ (Figure~\ref{fig:phil314-V}) makes this visible: the
I-cluster's rows ($d_3, d_5, d_6, d_{10}, d_{11}, d_{14}$) absorb the
bulk of the off-diagonal mass from \emph{both} blocks' columns.

\begin{table*}[t]
\centering
\caption{Per-voter signals on \texttt{philosophy\_314}. $\conf = 1 - \sema$
is the \textsc{When} own-pick weight; $1 - \conf$ is the peer budget split
via clipped centered cosine. D-pickers (left of separator) vs.\ I-pickers
(right).}
\label{tab:phil314-voters}
\footnotesize
\setlength{\tabcolsep}{4.2pt}
\begin{tabular}{l|cccccccccc|cccccc}
\toprule
voter        & $d_0$ & $d_1$ & $d_2$  & $d_4$ & $d_7$ & $d_8$  & $d_9$ & $d_{12}$ & $d_{13}$ & $d_{15}$ & $d_3$ & $d_5$  & $d_6$ & $d_{10}$ & $d_{11}$ & $d_{14}$ \\
pick         & D     & D     & D      & D     & D     & D      & D     & D        & D        & D        & I     & I      & I     & I        & I        & I        \\
\midrule
$\sema_{mm}$ & 0.839 & 0.839 & 0.606  & 0.839 & 0.529 & 0.694  & 0.839 & 0.694    & 0.725    & 0.529    & 0.725 & 0.348  & 0.725 & 0.725    & 0.725    & 0.787    \\
$\divintra$  & 0.517 & 0.507 & 0.506  & 0.504 & 0.504 & 0.502  & 0.499 & 0.490    & 0.486    & 0.485    & 0.488 & 0.503  & 0.500 & 0.493    & 0.487    & 0.479    \\
$\conf$      & 0.161 & 0.161 & 0.394  & 0.161 & 0.471 & 0.306  & 0.161 & 0.306    & 0.275    & 0.471    & 0.275 & \textbf{0.652} & 0.275 & 0.275 & 0.275 & 0.213 \\
\bottomrule
\end{tabular}
\end{table*}

\begin{figure}[t]
\centering
\includegraphics[width=\linewidth]{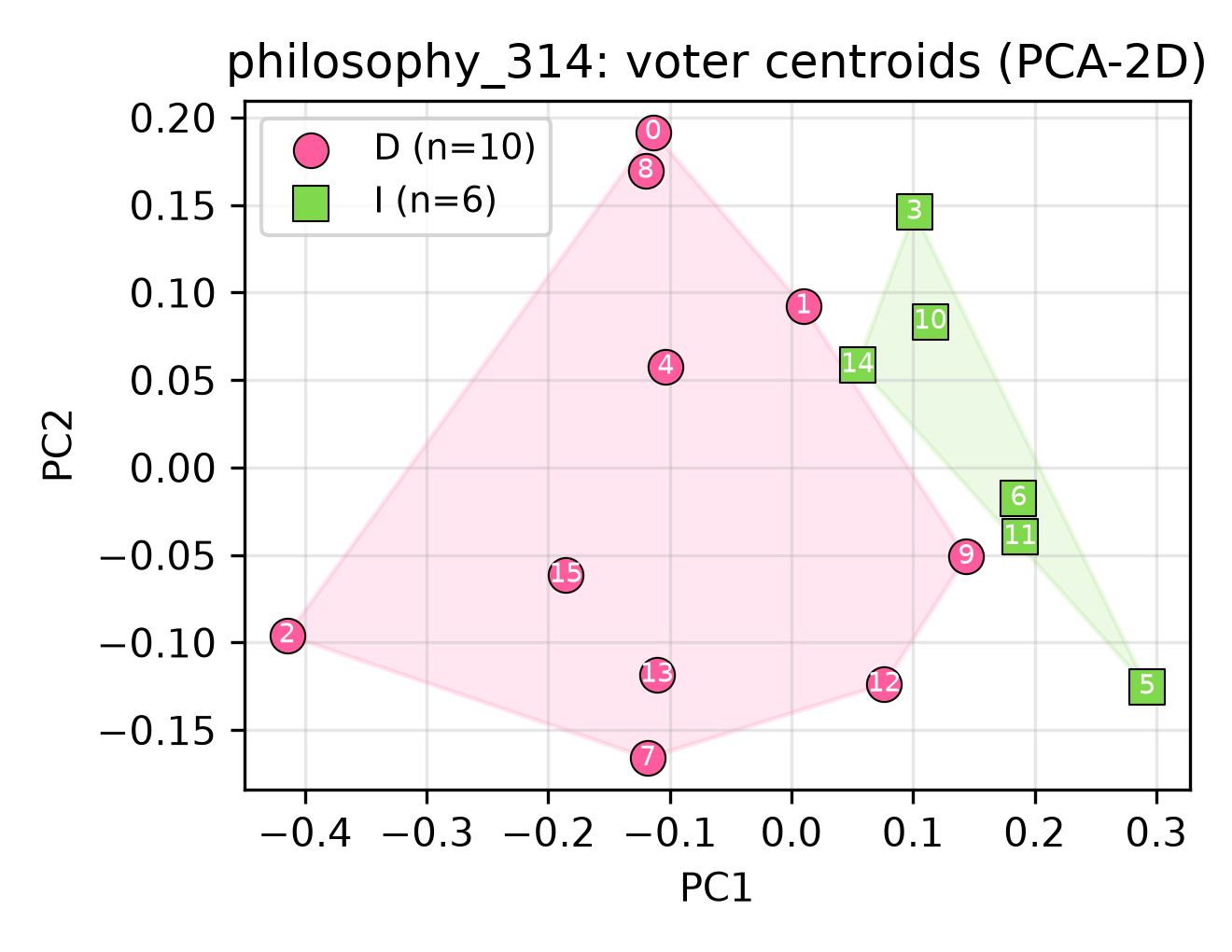}
\caption{PCA-2D of the $16$ per-voter centroids on
\texttt{philosophy\_314} after per-question centering; shaded regions
are the convex hulls of the two clusters (green $=$ gold letter
\texttt{I}, pink $=$ the mispicked \texttt{D}). The $10$ D-pickers form
a scattered cloud (mean within-cluster cosine $+0.050$) whose hull
spans the plane; the $6$ I-pickers form a tight cluster ($+0.519$).
The geometric incoherence of the majority is what PPV exploits.}
\label{fig:phil314-pca}
\end{figure}

\begin{figure}[t]
\centering
\includegraphics[width=\linewidth]{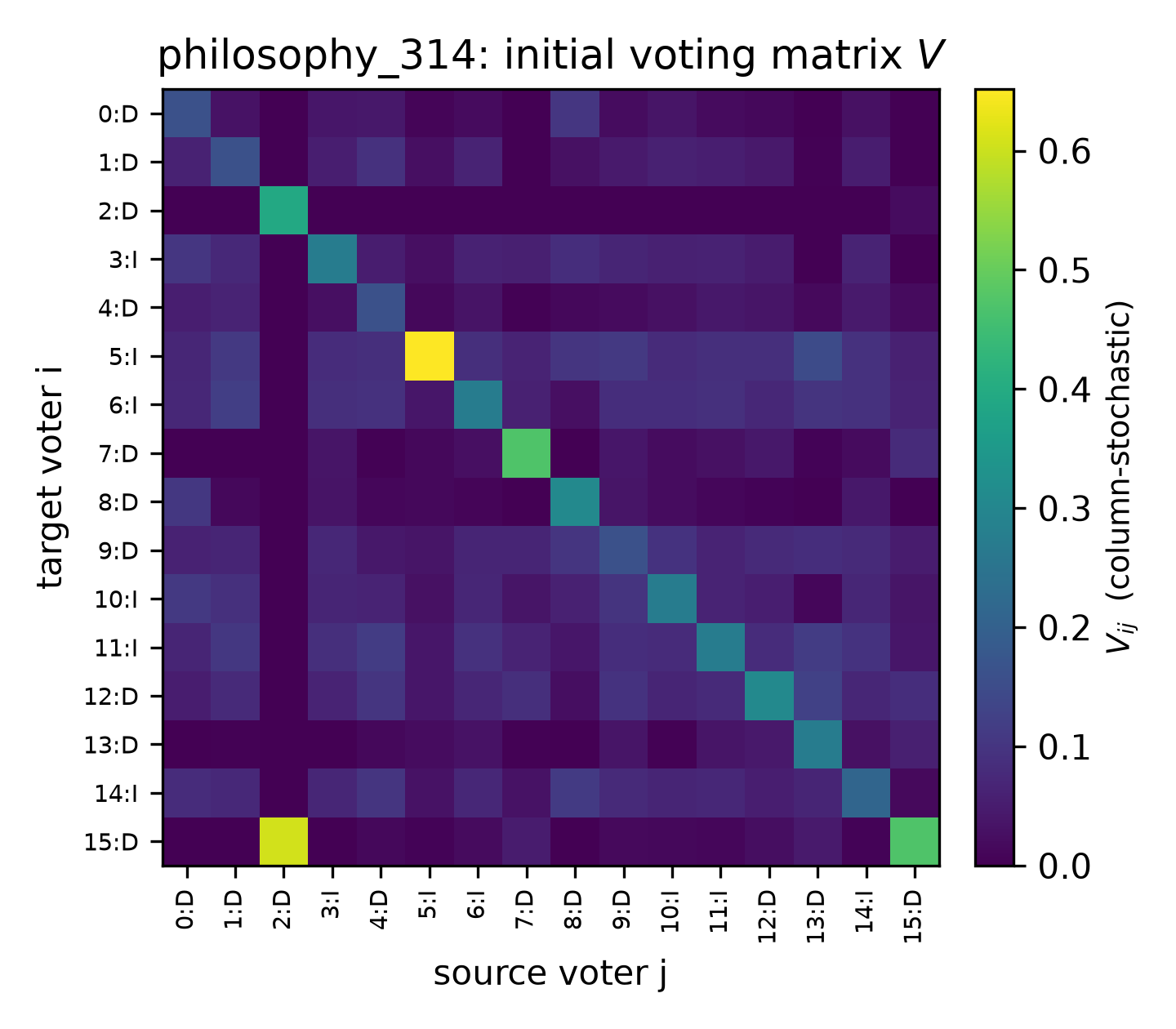}
\caption{Initial column-stochastic voting matrix $V$ for
\texttt{philosophy\_314}. Diagonal $=$ \textsc{When} own-pick weight
$\conf_j = 1 - \sema_j$; column off-diagonals $=$ \textsc{Whom} peer
split by clipped centered cosine. The I-cluster rows absorb the bulk
of the off-diagonal mass from both blocks' columns; within-D weights
are too small to compete.}
\label{fig:phil314-V}
\end{figure}

\paragraph{Propagation.} After repeated squaring of $V$, the stationary
distribution lands at
\[
\textstyle\sum_j W_{I, j} = 8.94, \quad \sum_j W_{D, j} = 7.06
\;\Rightarrow\; \hat{p} = \texttt{I} \;\;\text{(gold)}.
\]
A clear $10$--$6$ majority overturned because the minority's reasoning is
coherent and the majority's is not. \textsc{Whom} did the work that
\textsc{When} alone could not.

\paragraph{When entropy carries the day: \texttt{economics\_124}.} Not all
PPV wins look like \texttt{philosophy\_314}. On \texttt{economics\_124}
(gold $=$ \texttt{J}, picks split $8$/$8$ between \texttt{I} and
\texttt{J}), the entropy gap is the visible lever: J-pickers have mean
$\sema = 0.536$ vs I-pickers' $0.623$, and the cosine geometry is only
mildly assortative (within-J $+0.276$, within-I $+0.233$, cross
$+0.265$). The two signals each do a little work and compound to
$\sum_j W_{J,j} = 8.69$ vs $7.31$, flipping a tiebreak. Across the
$8{,}099$ non-trivial questions, both regimes occur: sometimes
\textsc{When} dominates, sometimes \textsc{Whom}.

\subsection{When the signal hurts: \texttt{engineering\_330}}
\label{sec:engineering330}

Not every question favors the normal polarity. On \texttt{engineering\_330}
(gold $=$ B, voter picks split $7 \times$ A, $7 \times$ B, $1 \times$ C,
$1 \times$ Z), the B-pickers have \emph{higher} mean $\sema$ (\,$0.608$\,)
than the A-pickers ($0.517$): the solver is more internally consistent on
the wrong answer. Under $\conf = 1 - \sema$, A-pickers retain more
own-pick mass and A wins. Switching to inverted polarity
($\conf = \sema$) recovers B. This is a ``confidently wrong'' failure of
letter-entropy as a per-question confidence signal, and is the population
that \texttt{inverted\_x\_div} targets. The two modes thus rescue disjoint
families of questions; on MMLU-Pro the normal-polarity family is the
larger of the two.

\subsection{The \textsc{When}/\textsc{Whom} decomposition}

PPV's column construction (Eq.~\ref{eq:column}) has two levers,
\textsc{When} (how much to keep on own pick) and \textsc{Whom} (how to
split the rest). To isolate which lever delivers the gain, we ablate each
independently:
\begin{itemize}[leftmargin=1.2em, topsep=2pt, itemsep=2pt]
\item \textbf{Fix \textsc{Whom}, vary \textsc{When}.} Holding peer weights
  at clipped centered cosine, varying $\conf_j$ over the five modes
  (Table~\ref{tab:modes}) produces the full spread of accuracies in
  Table~\ref{tab:main}.
\item \textbf{Fix \textsc{When}, vary \textsc{Whom}.} Holding
  \textsc{When} at each of the five modes and multiplying
  $\max(\cos_{ij}, 0)$ by each of four peer-quality scores (plus none)
  gives a $5 \times 5$ grid of configurations. No cell beats cosine-only
  \textsc{Whom}, with the best quality variant losing by a single
  question, and $11$ of the $25$ cells lose significantly (paired
  McNemar $p < 0.05$): quality-gating with the wrong polarity is
  actively harmful.
\end{itemize}
\textsc{When} is the load-bearing lever. \textsc{Whom} as plain clipped
cosine is essentially optimal in this design space.

\paragraph{Why \textsc{Whom} doesn't need an explicit quality gate.}
PPV's repeated-squaring propagation handles peer-side quality implicitly. A
low-quality voter that receives mass routes it back out through \emph{its
own column}, which again uses cosine. After two to three hops, mass settles
in neighborhoods of mutually high-cosine voters which, empirically, are
also high-quality neighborhoods.

\subsection{The centering ablation}

Skipping per-question centering, that is, using raw $e_c$ in place of
$\tilde e_c$, is the single largest ablation. Off-diagonal cosines become
near-constant (std $0.025$, range $[+0.88, +0.99]$), the \textsc{Whom}
block degenerates to near-uniform peer weights, and PPV reduces to a soft
averaging that brings no gain over majority. The centering trick is the
difference between PPV-as-aggregator and PPV-as-soft-majority.

% =============================================================================
% Theory: two-block characterization of when PPV overturns majority
% =============================================================================
% =============================================================================
% theory.tex — Two-block characterization of when PPV overturns majority.
% Input from draft.tex between the Mechanism and Negative Results sections.
% Requires in the preamble:
%   \usepackage{amsthm}
%   \newtheorem{theorem}{Theorem}
%   \newtheorem{proposition}{Proposition}
%   \newtheorem{lemma}{Lemma}
%   \newtheorem{corollary}{Corollary}
%   \newtheorem{remark}{Remark}
%   \newtheorem{assumption}{Assumption}
% Numerical checks in this section are reproduced by
%   experiments/theory/verify_two_block.py (closed form, grid, worked example)
%   experiments/theory/boundary_scatter.py  (full-dataset validation + figure
%   via experiments/theory/plot_boundary_scatter.py)
% =============================================================================

\section{When Does Delegation Beat Majority? A Two-Block Characterization}
\label{sec:theory}

The experiments of \S\ref{sec:experiments} answer the distributional
question, showing that PPV beats majority \emph{on average}, but the
mechanism study of \S\ref{sec:mechanism} suggests a sharper, per-instance
question: given
a concrete disagreement between a majority cluster and a minority cluster,
\emph{which one does PPV resolve to, and why?} This section answers that
question exactly in an idealized \emph{two-block} model of the voting
matrix, and validates the idealization at scale: across the full
MMLU-Pro run, the closed form calls the realized winner of the
$16 \times 16$ chain on $96.5\%$ of non-trivial questions and its
predicted mass gap tracks the realized gap at $r = 0.97$
(\S\ref{sec:theory-validation}). The result is a single
inequality that formalizes the narrative of \S\ref{sec:philosophy314}:
delegation overturns a majority precisely when the majority's
\emph{weighted leakage} toward the minority exceeds its raw vote margin.

\subsection{The two-block model}
\label{sec:two-block-model}

Consider a question on which the $n$ voters split into two clusters: a
block $A$ of $k$ voters picking letter $p_A$ and a block $B$ of
$m = n - k$ voters picking letter $p_B$, with $k \ge m$. (Real questions
may spread over more letters; the model addresses the top-two clusters,
which on the non-trivial subset carry the bulk of the mass.) We idealize
the column construction of Eq.~\ref{eq:column} by giving every voter in a
block the same signals:

\begin{assumption}[Block symmetry]
\label{ass:block-symmetry}
Every voter in block $X \in \{A, B\}$ has own-pick confidence
$\alpha_X \in (0, 1]$ and splits its peer budget $1 - \alpha_X$ as
follows: a fraction $\lambda_X \in [0, 1]$ crosses to the other block
(uniformly over its voters) and the rest stays within the block
(uniformly over the $|X| - 1$ same-block peers).\footnote{If a block is a
singleton it has no same-block peer and $\lambda_X = 1$ necessarily.}
\end{assumption}

The parameter $\lambda_X$ is the \emph{leakage} of block $X$: the share
of delegated mass that defects to the opposing cluster. In the
instantiation of \S\ref{sec:method}, $\alpha_X$ is the block's mean
confidence ($1 - \sema$ under the \texttt{confidence} mode) and
$\lambda_X$ summarizes the clipped-cosine geometry: a block whose
reasoning is internally incoherent but geometrically adjacent to a tight
opposing cluster has high leakage, exactly the configuration of the
D-cluster in \S\ref{sec:philosophy314}.

\begin{lemma}[Exact lumping]
\label{lem:lumping}
Under Assumption~\ref{ass:block-symmetry}, the absorption probabilities
of the $n$-voter chain are constant on blocks, and the chain is
equivalent to the four-state chain on $\{A, B, p_A, p_B\}$ whose two
non-absorbing states send their outgoing mass to
\[
\begin{aligned}
A &\;\to\; \alpha_A\, p_A \;+\; t_A\, B \;+\; (1 - \alpha_A - t_A)\, A,\\
B &\;\to\; \alpha_B\, p_B \;+\; t_B\, A \;+\; (1 - \alpha_B - t_B)\, B,
\end{aligned}
\]
while $p_A$ and $p_B$ are absorbing. Here each right-hand side is the
outgoing distribution over destination states, and
$t_X = (1 - \alpha_X)\,\lambda_X$ is the \emph{cross-mass} of block $X$.
\end{lemma}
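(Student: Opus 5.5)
I will prove the lemma by exhibiting the block partition as an exact (strong) lumping of the transient part of the chain, and then using uniqueness of the absorption probabilities.

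Write the voting matrix in the canonical absorbing form of Eq.~\ref{eq:V}, with transient block $Q = V_{d\leftarrow d}$ and absorbing block $R = V_{p\leftarrow d}$. By \citet[Theorem~IV.3]{sakai2025ppv}, $\Wmat$ restricted to the delegate columns equals $R(I-Q)^{-1}$. Hence column $j$ of this limit, which we denote $h_j \in \R^{|P|}$, is the unique solution of the first-step equations
\[
h_j \;=\; R_{\cdot j} + \sum_i Q_{ij}\, h_i .
\]
This needs every $\alpha_X>0$, so that $I-Q$ is invertible. Assumption~\ref{ass:block-symmetry} guarantees exactly that.

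\textbf{Step 1 (row sums over blocks are block-constant).} Fix a voter $j\in A$. Its column places mass $\alpha_A$ on $p_A$ and mass $(1-\alpha_A)\lambda_A/m$ on each voter of $B$. The remaining $(1-\alpha_A)(1-\lambda_A)$ is spread uniformly over the $k-1$ same-block peers, with zero on $j$ itself. Summing over destinations in each block gives:
\begin{itemize}
\item mass into $B$ equal to $t_A=(1-\alpha_A)\lambda_A$;
\item mass into $A$ equal to $(1-\alpha_A)(1-\lambda_A)=1-\alpha_A-t_A$;
\item mass into $p_A$ equal to $\alpha_A$.
\end{itemize}
None of these depends on $j$, and the same computation holds for $B$. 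This is the Kemeny--Snell lumpability condition for the partition $\{A,B,\{p_A\},\{p_B\}\}$. The lumped transition probabilities are exactly those displayed in the lemma. For a singleton block, $\lambda_X=1$, so the within-block term is $0$ and the formula is consistent.

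\textbf{Step 2 (absorption probabilities are block-constant).} Guess $h_j = \eta_A$ for all $j\in A$ and $h_j=\eta_B$ for all $j\in B$, where $(\eta_A,\eta_B)$ solve the two-state first-step system
\[
\begin{aligned}
\eta_A &= \alpha_A e_{p_A} + t_A\eta_B + (1-\alpha_A-t_A)\eta_A,\\
\eta_B &= \alpha_B e_{p_B} + t_B\eta_A + (1-\alpha_B-t_B)\eta_B .
\end{aligned}
\]
This $2\times 2$ system is nonsingular: its determinant is $\alpha_A\alpha_B + \alpha_A t_B + \alpha_B t_A > 0$. Substituting the guess into the $n$-voter equations, the right-hand side for $j\in A$ collapses by Step~1 to the first line above, and likewise for $B$. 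So the guess is a solution. Uniqueness, from invertibility of $I-Q$, then shows it is \emph{the} solution. This establishes both claims: the absorption probabilities are constant on blocks, and they coincide with those of the four-state chain. Equivalence of the chains then follows because the consensus tallies $\sum_j \Wmat_{p,j}$ equal $k\eta_A + m\eta_B$.

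The main obstacle is being careful about what ``equivalent'' means. The construction of Eq.~\ref{eq:column} zeroes the diagonal, whereas the lumped chain has a self-loop $A\to A$. The lumped self-loop is legitimate because it aggregates transitions to \emph{other} same-block voters; it is not a self-loop of any individual voter. The uniqueness argument in Step~2 is also what rules out any non-symmetric limit. Everything else is routine bookkeeping.
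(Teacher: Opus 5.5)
Your proof is correct, but it takes a different route from the paper's.

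\textbf{The paper's route.} It gets block-constancy first, from symmetry. The kernel is invariant under $S_k \times S_m$ permuting voters within blocks, and this group acts transitively on each block, so the absorption probabilities are constant on blocks. Only then does it aggregate, using that constancy to justify collapsing the within-block peers into a single state.

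\textbf{Your route.} You verify the Kemeny--Snell condition directly: each voter's total mass into $p_X$, into its own block, and into the other block depends only on its block. You then exhibit the block-constant solution of $h_j = R_{\cdot j} + \sum_i Q_{ij} h_i$ built from the $2\times 2$ lumped system. Finally, you identify it with the true absorption probabilities via invertibility of $I-Q$.

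\textbf{What each buys.} The paper's argument is shorter and gets constancy without solving anything. Yours buys three things.
\begin{enumerate}
\item It makes explicit the well-posedness step that the paper's parenthetical glosses over. Column sums of $Q$ are $1-\alpha_X<1$, so $Q^x \to 0$. Incidentally, this also gives $\Wmat = R(I-Q)^{-1}$ on the delegate columns directly from the block form of $V^x$, so you need not lean on the cited theorem for that formula.
\item It proves a strictly stronger statement. Your substitution uses only the per-voter block totals $\alpha_X$, $t_X$, and $1-\alpha_X-t_X$. So the lemma holds for any within-block and cross-block routing with those totals, not just the uniform splits of Assumption~\ref{ass:block-symmetry}. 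The permutation-symmetry argument, by contrast, needs those uniform splits. This is a useful observation for the paper's validation: heterogeneous cosines inside a block are harmless for the lumping, and only within-block variation in $\alpha_j$ and $\lambda_j$ breaks it.
\item Your determinant $\alpha_A\alpha_B+\alpha_A t_B+\alpha_B t_A$ is exactly the $D$ of Proposition~\ref{prop:masses}. Your Step~2 therefore already contains that proposition's first-step computation.
\end{enumerate}
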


\begin{proof}
The transition kernel is invariant under the action of
$S_k \times S_m$ permuting voters within blocks, so absorption
probabilities are constant on each block. Aggregating states by block
gives the stated four-state chain: from any voter of $A$, total mass
$\alpha_A$ absorbs at $p_A$, total mass $(1-\alpha_A)\lambda_A = t_A$
crosses to block $B$, and the remainder stays inside block $A$
(distributed among peers, which by constancy of absorption probabilities
may be lumped into a single state); symmetrically for $B$.
\end{proof}

\subsection{Absorbed mass and the flip condition}

\begin{proposition}[Closed-form absorbed mass]
\label{prop:masses}
Under Assumption~\ref{ass:block-symmetry}, the total mass absorbed at
each policy (Eq.~\ref{eq:winner}) is
\begin{equation}
M_{A} = \frac{\alpha_A \bigl[\, k\,(\alpha_B + t_B) + m\, t_B \,\bigr]}{D},
\qquad
M_{B} = n - M_{A},
\label{eq:mass}
\end{equation}
with $D = \alpha_A \alpha_B + \alpha_A t_B + \alpha_B t_A > 0$.
\end{proposition}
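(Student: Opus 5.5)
The plan is to reduce the computation, via Lemma~\ref{lem:lumping}, to first-step analysis on the four-state chain. The sum in Eq.~\ref{eq:winner} is $\sum_{j} \Wmat_{p_A, j}$, and $\Wmat_{p_A, j}$ is the probability that a unit of mass started at voter $d_j$ is eventually absorbed at $p_A$. By Lemma~\ref{lem:lumping} this probability is constant on blocks. Write $a$ for its value on block $A$ and $b$ for its value on block $B$. Then
\[
M_A = k\,a + m\,b .
\]
The whole proof therefore comes down to computing $a$ and $b$ in the lumped chain on $\{A, B, p_A, p_B\}$.

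First-step analysis on the lumped transitions gives the linear system
\[
\begin{aligned}
a &= \alpha_A + t_A\, b + (1-\alpha_A - t_A)\, a,\\
b &= t_B\, a + (1-\alpha_B - t_B)\, b .
\end{aligned}
\]
The $B$-row contains no direct term into $p_A$, since $B$ absorbs only at $p_B$. Moving the self-loop terms to the left, the system becomes
\[
(\alpha_A + t_A)\, a = \alpha_A + t_A\, b, \qquad (\alpha_B + t_B)\, b = t_B\, a .
\]
We solve the second equation for $b = t_B a/(\alpha_B + t_B)$ and substitute it into the first. This leaves a single equation for $a$ whose coefficient is
\[
(\alpha_A + t_A)(\alpha_B + t_B) - t_A t_B = \alpha_A\alpha_B + \alpha_A t_B + \alpha_B t_A = D .
\]
Hence
\[
a = \frac{\alpha_A(\alpha_B + t_B)}{D}, \qquad b = \frac{\alpha_A t_B}{D},
\]
and substituting into $M_A = k a + m b$ gives Eq.~\eqref{eq:mass}.

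Two side facts need care.

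\emph{Positivity of $D$.} Since $\alpha_A, \alpha_B \in (0,1]$ and $t_A, t_B \ge 0$, we have $D \ge \alpha_A\alpha_B > 0$. Hence the system is nonsingular and its solution is unique.

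\emph{$M_B = n - M_A$.} Each non-absorbing state absorbs at rate at least $\min(\alpha_A, \alpha_B) > 0$ per step. Therefore the mass remaining in transient states decays geometrically, and every column of $\Wmat$ is a probability vector supported on $\{p_A, p_B\}$. This is consistent with the limit result of \citet{sakai2025ppv} quoted in \S\ref{sec:prelim}. Summing over the $n$ columns then gives $M_A + M_B = n$. Alternatively, one can solve the symmetric system for absorption at $p_B$ and check that the two probabilities sum to $1$ on each block.

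The main obstacle is justifying that first-step analysis on the lumped chain really computes the entries of $\Wmat$, including the self-loop weight $1 - \alpha_A - t_A$, which is the within-block delegation. This is exactly what Lemma~\ref{lem:lumping} supplies: absorption probabilities are constant on blocks, so mass moving to any same-block peer has the same continuation value $a$ (or $b$). I would also note explicitly that the uniqueness of the bounded solution to this absorbing-chain system, which follows from $D > 0$, identifies it with the limit $\Wmat$. Everything after that is routine algebra.
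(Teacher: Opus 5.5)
Your proposal is correct and follows the paper's proof essentially step for step. You use first-step analysis on the lumped chain of Lemma~\ref{lem:lumping}, the same rearranged $2\times 2$ system, the identity $(\alpha_A+t_A)(\alpha_B+t_B)-t_At_B = D$, and $M_A = k\,a + m\,b$ with $M_B = n - M_A$ by conservation. Your added remarks on nonsingularity ($D \ge \alpha_A\alpha_B > 0$) and on geometric decay of transient mass spell out what the paper compresses into ``the chain absorbs almost surely.''
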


\begin{proof}
Let $u_X$ denote the probability that a unit of mass starting at block
$X$ absorbs at $p_A$. First-step analysis on the lumped chain of
Lemma~\ref{lem:lumping} gives
\[
\begin{aligned}
u_A &= \alpha_A + (1 - \alpha_A - t_A)\, u_A + t_A\, u_B,\\
u_B &= (1 - \alpha_B - t_B)\, u_B + t_B\, u_A .
\end{aligned}
\]
Rearranged, $(\alpha_A + t_A)\, u_A = \alpha_A + t_A u_B$ and
$(\alpha_B + t_B)\, u_B = t_B u_A$. Substituting the second into the
first and using the identity
$(\alpha_A + t_A)(\alpha_B + t_B) - t_A t_B = D$ yields
\[
u_A = \frac{\alpha_A (\alpha_B + t_B)}{D},
\qquad
u_B = \frac{\alpha_A\, t_B}{D}.
\]
Since $\alpha_A, \alpha_B > 0$ implies $D > 0$, the chain absorbs almost
surely and $M_A = k u_A + m u_B$ gives Eq.~\ref{eq:mass};
$M_B = n - M_A$ by conservation of the $n$ units of voting mass.
\end{proof}

\begin{theorem}[Flip condition]
\label{thm:flip}
Define the \emph{delegation odds} of block $X$ as
$r_X = \frac{1 - \alpha_X}{\alpha_X}$, the ratio of delegated to retained
mass. Under Assumption~\ref{ass:block-symmetry}, PPV resolves the
question to the minority letter $p_B$ if and only if
\begin{equation}
n \,\bigl( r_A \lambda_A \;-\; r_B \lambda_B \bigr) \;>\; k - m .
\label{eq:flip}
\end{equation}
\end{theorem}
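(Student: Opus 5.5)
The plan is to reduce the winner comparison to a single algebraic inequality using Proposition~\ref{prop:masses}, and then rewrite that inequality in terms of the delegation odds. By Eq.~\ref{eq:winner}, PPV resolves to $p_B$ exactly when $M_B > M_A$. Conservation, $M_B = n - M_A$, turns this into $M_A < n/2$. I will read the ``if and only if'' with strict inequalities on both sides, so that the boundary case $M_A = n/2$ (an exact tie) corresponds to equality in Eq.~\ref{eq:flip}.

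Next I substitute the closed form of Eq.~\ref{eq:mass}. The key simplification is that the $t_B$ terms in the numerator combine, $k\,t_B + m\,t_B = n\,t_B$, so the numerator becomes $\alpha_A(k\alpha_B + n t_B)$. Since $D > 0$ by Proposition~\ref{prop:masses}, I can multiply through by $2D$ without flipping the inequality, which gives
\[
2k\,\alpha_A\alpha_B + 2n\,\alpha_A t_B \;<\; n\,\alpha_A\alpha_B + n\,\alpha_A t_B + n\,\alpha_B t_A .
\]
Collecting terms and using $2k - n = k - m$, this becomes $(k - m)\,\alpha_A\alpha_B < n\,(\alpha_B t_A - \alpha_A t_B)$.

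Finally I divide by $\alpha_A\alpha_B$, which is strictly positive because $\alpha_X \in (0,1]$ under Assumption~\ref{ass:block-symmetry}. This yields $k - m < n\,(t_A/\alpha_A - t_B/\alpha_B)$. The definition $t_X = (1-\alpha_X)\lambda_X$ from Lemma~\ref{lem:lumping} then gives $t_X/\alpha_X = r_X\lambda_X$, which is exactly Eq.~\ref{eq:flip}.

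Every step is an equivalence, since we only multiply or divide by positive quantities, so both directions follow at once. I expect no genuine obstacle here. The only points needing care are keeping strictness and the tie convention consistent, and checking that no division by zero occurs; both are guaranteed by $\alpha_X > 0$ and $D > 0$.
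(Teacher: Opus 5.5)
Your proof is correct and follows the paper's argument: substitute the closed-form masses of Proposition~\ref{prop:masses}, clear the positive denominator $D$, collect terms into $(k-m)\,\alpha_A\alpha_B < n(\alpha_B t_A - \alpha_A t_B)$, and divide by $\alpha_A\alpha_B > 0$. The only difference is cosmetic: you use conservation to rewrite $M_B > M_A$ as $M_A < n/2$, whereas the paper writes $M_B$ out explicitly by symmetry. Your explicit treatment of the tie case is a small improvement.
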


\begin{proof}
By Proposition~\ref{prop:masses}, $M_B > M_A$ iff
$\alpha_B [\, m(\alpha_A + t_A) + k t_A \,] >
 \alpha_A [\, k(\alpha_B + t_B) + m t_B \,]$.
Expanding and collecting,
\[
\alpha_A \alpha_B\, (m - k) \;+\; n\,(\alpha_B t_A - \alpha_A t_B) \;>\; 0 .
\]
Dividing by $\alpha_A \alpha_B > 0$ and substituting
$t_X = (1 - \alpha_X)\lambda_X$ gives Eq.~\ref{eq:flip}.
\end{proof}

Eq.~\ref{eq:flip} is the formal answer to the title question. The
left-hand side is the \emph{weighted leakage asymmetry}: each block's
leakage $\lambda_X$ (a \textsc{Whom} quantity, set by reasoning
geometry) amplified by its delegation odds $r_X$ (a \textsc{When}
quantity, set by letter entropy). The right-hand side is the raw vote
margin that majority voting reports. Delegation beats majority exactly
when the majority holds its votes weakly \emph{and} routes them toward
the minority, while the minority does not reciprocate.

\begin{corollary}[Conditional do-no-harm]
\label{cor:no-harm}
No leakage pattern can overturn the majority when
$k - m \ge n\, r_A$, i.e.\ when
\[
\alpha_A \;\ge\; \frac{n}{\,n + (k - m)\,}.
\]
\end{corollary}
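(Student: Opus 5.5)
The plan is to read the corollary off Theorem~\ref{thm:flip} by bounding its left-hand side over all admissible leakage patterns. Under Assumption~\ref{ass:block-symmetry} the leakages satisfy $\lambda_A,\lambda_B\in[0,1]$. The delegation odds obey $r_A\ge 0$ and $r_B\ge 0$, since $\alpha_X\in(0,1]$. Hence the weighted leakage asymmetry satisfies
\[
n\,(r_A\lambda_A - r_B\lambda_B)\;\le\; n\,r_A\lambda_A \;\le\; n\,r_A ,
\]
with equality at $\lambda_A=1$, $\lambda_B=0$. This supremum is attained, which shows the bound cannot be improved by any choice of leakage.

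Next I would combine this with the hypothesis $k-m\ge n\,r_A$. For every $(\lambda_A,\lambda_B)$ it gives $n(r_A\lambda_A-r_B\lambda_B)\le n\,r_A\le k-m$. The strict inequality Eq.~\ref{eq:flip} therefore fails, and by Theorem~\ref{thm:flip} PPV does not resolve to $p_B$. In the boundary case of equality we get $M_A\ge M_B$, and the majority letter is not beaten. If $M_A=M_B$ exactly, the outcome depends on the argmax tie-breaking in Eq.~\ref{eq:winner}. I would note this, since ``cannot overturn'' should be read as $M_B\not> M_A$.

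Finally I would rewrite the condition in terms of $\alpha_A$. Substituting $r_A=(1-\alpha_A)/\alpha_A$ and multiplying by $\alpha_A>0$ gives $(k-m)\alpha_A\ge n(1-\alpha_A)$. This is equivalent to $\alpha_A\,(n+k-m)\ge n$, and since $n+k-m>0$ we may divide to get $\alpha_A\ge n/(n+k-m)$. Every step is reversible, so the two forms of the hypothesis coincide.

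There is no real obstacle here; the only care needed is at the boundary. One point is the equality/tie case just discussed. The other is that a singleton $B$ forces $\lambda_B=1$, which only restricts the feasible set and so leaves the bound intact. The condition is ``conditional'' because it depends on $\alpha_A$ and not on $\lambda$. It is also sharp: if $k-m<n\,r_A$, the pattern $\lambda_A=1$, $\lambda_B=0$ satisfies Eq.~\ref{eq:flip} and flips the outcome.
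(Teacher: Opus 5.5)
Your proposal is correct and matches the paper's proof: bound the left side of Eq.~\ref{eq:flip} by $n r_A$ using $\lambda_A \le 1$, $\lambda_B \ge 0$ and $r_B \ge 0$, conclude from Theorem~\ref{thm:flip} that no flip occurs, and rearrange to get the $\alpha_A$ form. One small caveat on your optional sharpness remark: when $m = 1$ the footnote forces $\lambda_B = 1$, so the witness $\lambda_B = 0$ is not admissible, and you would instead need $\alpha_B = 1$ (so $r_B = 0$) to reach the bound $n r_A$.
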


\begin{proof}
$\lambda_A \le 1$ and $\lambda_B \ge 0$ bound the left side of
Eq.~\ref{eq:flip} by $n r_A$.
\end{proof}

For example, a $12$--$4$ majority ($k - m = 8$, $n = 16$) is safe from
any flip whenever the majority's mean confidence is at least
$16/24 = 2/3$; this is the theoretical counterpart of the trivial-subset
threshold of Table~\ref{tab:main}. The safety region is
\emph{conditional} on the confidence floor, and necessarily so:
\citet{kahng2018liquid} prove that no local delegation mechanism can
guarantee unconditional do-no-harm while retaining any gain over direct
voting. Theorem~\ref{thm:flip} sharpens that impossibility into an exact
boundary for this mechanism.

\begin{remark}[Majority recovery]
If the two signals carry no asymmetry, meaning $\alpha_A = \alpha_B$ and
$\lambda_A = \lambda_B$, the left side of Eq.~\ref{eq:flip} vanishes
and PPV agrees with majority for any $k > m$. PPV deviates from majority
only when entropy or geometry actually discriminates between the
clusters.
\end{remark}

\begin{remark}[Tiebreak]
For $k = m$ the condition reduces to $r_A \lambda_A > r_B \lambda_B$:
the tied cluster with the smaller odds-weighted leakage wins. This is
the \texttt{economics\_124} regime of \S\ref{sec:philosophy314}, where a
modest entropy gap ($r$) decides a tie under near-symmetric geometry
($\lambda$).
\end{remark}

\begin{remark}[Polarity]
Nothing in Theorem~\ref{thm:flip} knows which block holds gold. When the
solver is confidently wrong (\texttt{engineering\_330}: gold sits in the
\emph{higher}-entropy block), $r$ works against the gold cluster and the
same inequality correctly predicts that PPV under the
\texttt{confidence} mode loses the question. The theorem thus also
formalizes the polarity failure mode of \S\ref{sec:engineering330}.
\end{remark}

\subsection{Numerical validation}
\label{sec:theory-validation}

We validate the theory in three steps
(\texttt{experiments/theory/verify\_two\_block.py}).

\paragraph{Exactness of the closed form.} We instantiate the explicit
$(n+2) \times (n+2)$ column-stochastic matrix of
Assumption~\ref{ass:block-symmetry} and compute its limit by the
repeated-squaring procedure of \S\ref{sec:prelim}. Across $200$ random
configurations $(k, m, \alpha_A, \alpha_B, \lambda_A, \lambda_B)$ the
absorbed masses match Eq.~\ref{eq:mass} to $1.7 \times 10^{-14}$
(floating-point precision), and across a grid of $19{,}404$
configurations the sign predicted by Eq.~\ref{eq:flip} matches the
brute-force winner in every case.

\paragraph{The worked example, predicted.} We then feed
Theorem~\ref{thm:flip} the block summary statistics of
\texttt{philosophy\_314} (\S\ref{sec:philosophy314}), and nothing
else: $k = 10$, $m = 6$, mean confidences $\alpha_A = 0.287$,
$\alpha_B = 0.327$ (Table~\ref{tab:phil314-voters}), and leakages
$\lambda_A = 0.534$, $\lambda_B = 0.481$ (each block's mean share of
peer budget routed to the other block under clipped centered cosine).
Six numbers summarizing a $16 \times 16$ matrix. The flip condition
reads $16\,(2.49 \times 0.534 - 2.06 \times 0.481) = 5.43 > 4$: the
theorem predicts the flip, and moreover predicts it is \emph{fragile}:
the inequality fails for any $\lambda_B > 0.52$, so had the minority
routed another four percentage points of its peer budget back toward
the majority, the $10$--$6$ margin would have stood. The predicted masses
(Eq.~\ref{eq:mass}) are $M_D = 7.78$, $M_I = 8.22$, against the realized
full-matrix values of $7.06$ and $8.94$ (\S\ref{sec:philosophy314}): the
two-block idealization calls the winner and the direction of the mass
gap correctly while compressing its magnitude, the price of replacing
the heterogeneous per-voter cosine structure with two block averages.
The dataset-level comparison below quantifies this compression: on
average it is small and unbiased.

\paragraph{Full-dataset validation.} We repeat the six-number extraction
on every question of the MMLU-Pro run: top-two letter clusters define
the blocks, $\alpha_X$ is the block mean confidence, and $\lambda_X$ the
block mean cross-routed peer-budget share, exactly as above
(Figure~\ref{fig:boundary-scatter}). On the non-trivial questions whose
full-chain winner is one of the two block letters ($n = 7{,}688$), the
sign of Eq.~\ref{eq:flip} matches the realized outcome on $96.5\%$, with
flip precision $0.88$ and recall $0.84$; on the subset that is exactly
two-block ($n = 769$, no third cluster and no abstentions), accuracy
rises to $97.5\%$ and recall to $0.95$. Mispredictions concentrate where
the two sides of Eq.~\ref{eq:flip} nearly tie
(Figure~\ref{fig:boundary-scatter}, left). The parameter-free boundary
is also at the fitted optimum: a grid search over linear boundaries
$y > ax + b$ in this plane gains less than $0.1$\,pp in-sample, so a
trained classifier has no room to improve on the derived
slope-one, intercept-zero line. Because
Proposition~\ref{prop:masses} predicts \emph{masses} rather than only
the binary winner, the theory is falsifiable at a finer grain: across
the same questions, regressing the realized normalized mass gap on the
predicted one gives Pearson $r = 0.973$ with slope $1.02$
(Figure~\ref{fig:boundary-scatter}, right) --- the two-block closed form
is a nearly unbiased predictor of the full chain's absorbed masses. On
the $2{,}962$ trivial questions ($\ge 12/16$ supermajorities) the
condition predicts no flip and the chain realizes none, the empirical
counterpart of Corollary~\ref{cor:no-harm}. The transfer of this
boundary across solver models and benchmarks is left to the expanded
experimental study.

\begin{figure*}[t]
\centering
\includegraphics[width=0.92\textwidth]{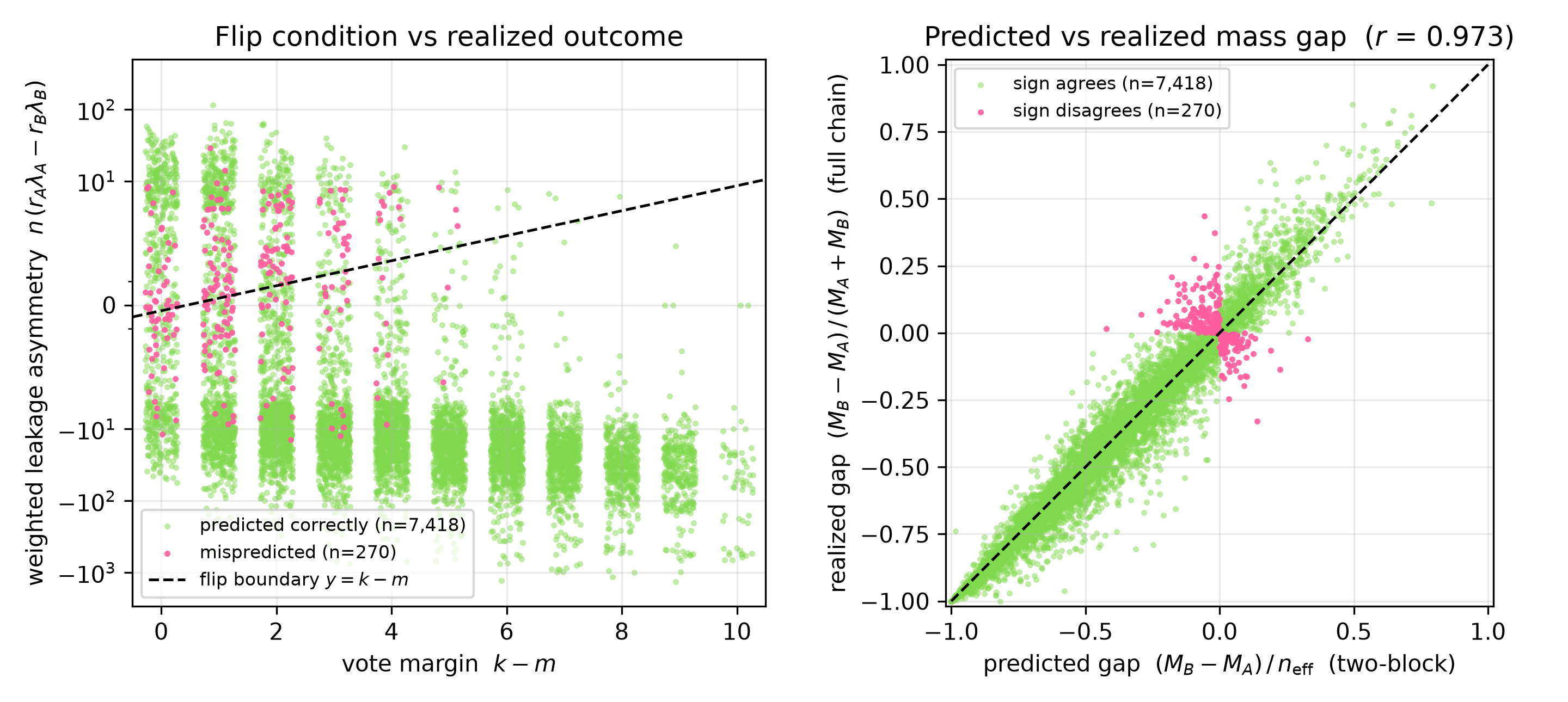}
\caption{\textbf{Full-dataset validation of Theorem~\ref{thm:flip}}
(MMLU-Pro, non-trivial questions resolved to a top-two letter,
$n = 7{,}688$). \emph{Left:} each question placed by its vote margin
$k - m$ and weighted leakage asymmetry $n(r_A \lambda_A - r_B \lambda_B)$
(symlog scale); the dashed line is the flip boundary of
Eq.~\ref{eq:flip}. Green: the closed form predicts the realized outcome
of the full $16$-voter chain; pink: misprediction ($3.5\%$),
concentrated at the boundary. \emph{Right:} predicted (two-block)
versus realized (full-chain) normalized mass gap; Pearson $r = 0.973$,
OLS slope $1.02$. Sign disagreements (pink) concentrate near the origin,
where both gaps are close to zero.}
\label{fig:boundary-scatter}
\end{figure*}

% =============================================================================
\section{Negative Results}
\label{sec:negatives}
% =============================================================================

\subsection{$P(\text{True})$ is anti-correlated with correctness}

The same-model $P(\text{True})$ signal of \citet{kadavath2022know} is a
standard auxiliary confidence. We computed $P(\text{True})$ per voter over
all $n \cdot |Q| = 192{,}512$ (voter, question) rows on this setup
(Table~\ref{tab:confidence}). Its AUROC for predicting correctness is
$0.47$, which is below chance. The top-$25\%$ of voters by $P(\text{True})$ are
correct $32.1\%$ of the time, below the base rate of $35.4\%$. CoCoA-style
products with $\sema$ inherit the anti-correlation and underperform
$\sema$ alone. We read this as model- and setting-specific:
high-temperature generation from a $1.7$B-parameter solver produces
confident-sounding-but-wrong outputs and its same-model verifier rewards
them. For this setup the consequence is clear: $P(\text{True})$ should not
enter the aggregator.

\subsection{Unsupervised mode selection: a pilot}
\label{sec:ensemble-pilot}

The five confidence modes rescue overlapping but distinct question
families (\S\ref{sec:experiments}), so a per-question mode selector
could in principle approach the pass@16 ceiling. On a $100$-question
random sample we piloted two unsupervised selectors:
\texttt{ensemble\_margin} (run two modes, keep the one whose PPV
consensus margin is sharper) and \texttt{ensemble\_agree\_else\_maj}
(if the two modes agree, take that answer; otherwise fall back to
majority). Neither beat the best single mode: \texttt{ensemble\_margin}
matched \texttt{confidence} alone ($+3$ questions over majority), the
agreement fallback did worse ($+2$), while a gold-aware oracle selector
reached $+6$. We read this narrowly: the selectors we tried do not
close the gap. Whether better \textsc{When}/\textsc{Whom} signals or
selectors exist is open, since our letter entropy and centered cosine are
one instantiation of the levers rather than the ceiling of the framework.
Theorem~\ref{thm:flip} states precisely what any candidate must
deliver: the correct polarity of the leakage asymmetry, per question.

\begin{table}[t]
\centering
\caption{Per-voter confidence calibration on MMLU-Pro (Qwen3-1.7B,
$192{,}512$ voter-question rows). Base rate $= 0.354$.}
\label{tab:confidence}
\small
\begin{tabular}{lrrr}
\toprule
\textbf{Signal} & \textbf{AUROC} & \textbf{Top-25\%} & \textbf{Top-50\%} \\
\midrule
$1 - \sema_{\mathrm{plug}}$       & $0.705$ & $0.588$ & $0.486$ \\
$1 - \sema_{\mathrm{mm}}$         & $0.704$ & $0.586$ & $0.486$ \\
$P(\text{True}) \cdot \sema$      & $0.649$ & $0.549$ & $0.455$ \\
mean $P(\text{True})$ over 8      & $0.614$ & $0.514$ & $0.427$ \\
\textbf{$P(\text{True})$ (raw)}   & \textbf{$0.472$} & \textbf{$0.321$} & \textbf{$0.328$} \\
\bottomrule
\end{tabular}
\end{table}

% =============================================================================
\section{Discussion}
% =============================================================================

\paragraph{The signal: letter entropy is load-bearing, geometry is the medium.}
Across all our experiments the dominant ablation is the choice of
\textsc{When}. Geometry (\textsc{Whom}) is necessary for the propagation
to flow non-trivially, but its functional form is essentially fixed once
centered cosine is in place. This pattern of having a strong per-voter scalar
signal channeled through a mild geometric backbone suggests a general
design principle for delegation-based aggregators: invest in the scalar,
use the embedding only to route.

\paragraph{Why propagation, not weighting?} A simpler approach would
weight each voter's pick by $\conf_j$ and sum: an entropy-weighted
majority. PPV's propagation differs in that low-confidence voters do not
just contribute less; they \emph{redistribute} their voting mass to
peers. On \texttt{philosophy\_314} (\S\ref{sec:philosophy314}) this
redistribution is what overturns the majority: the D-cluster's leaked
budget, routed by clipped cosine, concentrates on the geometrically
coherent I-cluster --- D-pickers route over half their peer budget across
because the within-D weights are too small to compete --- and propagation
compounds the asymmetry over multiple hops.

\paragraph{Polarity is question-dependent.} The cleanest open problem we
leave is the polarity question: on most MMLU-Pro questions, lower-entropy
voters are more reliable, but on a structured minority the relationship
inverts. Detecting this per question without gold labels is, on the
features we tried, unreliable. A supervised polarity classifier may be 
the next step, but it changes the regime: aggregation becomes
``unsupervised at inference time but supervised at design time.''

\paragraph{Knowing \textsc{When} and \textsc{Whom} as a capability.}
Theorem~\ref{thm:flip} can also be read as a demand on the model rather
than on the aggregator: delegation helps exactly when the sampled
population emits \textsc{When} signals (self-confidence) and
\textsc{Whom} signals (peer recognition) whose joint asymmetry points
toward the correct cluster, and Corollary~\ref{cor:no-harm} says harm is
avoided only when confidence is high precisely where the majority is
right. This is a stronger requirement than probability calibration,
because it is \emph{relational}: the model must know when it knows
\emph{and} recognize which of its peers reason soundly, jointly and per
instance. Because the boundary in Eq.~\ref{eq:flip} is closed-form, the
requirement is directly measurable: score a model by the fraction of
instances whose realized $(k, m, \conf, \lambda)$ configuration falls on
the gold side of the flip condition. Our negative results show current
signals deliver this only partially: $P(\text{True})$ is anti-correlated
with correctness, and polarity inverts on a structured minority of
questions. We therefore see \emph{delegation competence}, the capacity
to give away one's vote without doing harm, as a candidate axis of
capability distinct from raw accuracy, one that future models could be
evaluated on, or trained toward, directly.

% =============================================================================
\section{Conclusion}
% =============================================================================

Self-consistency leaves two free signals on the table. By feeding
letter-level semantic entropy and centered embedding cosine into PPV
\citep{sakai2025ppv}, we obtain an unsupervised aggregator that beats
majority by $+2.24$\,pp on the non-trivial subset of MMLU-Pro at scale,
with paired McNemar significance $p \approx 10^{-14}$. The
\textsc{When}/\textsc{Whom} decomposition shows the gain is delivered by
per-voter entropy as confidence; the geometric \textsc{Whom} side stays
cosine-only because PPV's propagation implicitly launders peer-side
quality. A two-block analysis makes the title question exact: delegation
overturns a $k$--$m$ majority precisely when the odds-weighted leakage
asymmetry exceeds the vote margin (Theorem~\ref{thm:flip}), with
do-no-harm guaranteed above a closed-form confidence floor. Negative
results constrain the design space for future
unsupervised LLM aggregators and isolate the open problem (per-question
polarity selection) where supervision plausibly helps.

% =============================================================================
% Bibliography
% =============================================================================
\onecolumn
\clearpage
\bibliographystyle{plainnat}
\bibliography{references}

% --- The entries below are kept as a fallback thebibliography block.
% --- Delete from here to \end{thebibliography} if you use references.bib.
\begin{comment}

\end{comment}

\end{document}